\newcommand{\Signal}{\mathbf{X}}
\newcommand{\Coeff}{\mathbf{A}}
\newcommand{\Db}{\mathbf{D}}
\newcommand{\Dbp}{\mathbf{D}^\prime}
\newcommand{\Ub}{\mathbf{U}}
\newcommand{\signal}{\mathbf{x}}
\newcommand{\coeff}{\boldsymbol{\alpha}}
\newcommand{\Lcal}{\mathcal{L}}
\newcommand{\Ncal}{\mathcal{N}}
\newcommand{\EE}{\mathbb{E}}
\newcommand{\PP}{\mathbb{P}}
\newcommand{\R}[1]{\mathbb{R}^{#1}}
\newcommand{\Afrak}{\mathfrak{A}}
\newcommand{\Dcal}{\mathfrak{D}}
\newcommand{\Pfrak}{\mathfrak{P}} 
\newcommand{\Sph}{\mathfrak{S}}
\newcommand{\vardev}{t}
\newtheorem{theorem}{Theorem}
\newtheorem{lemma}[theorem]{Lemma}
\newtheorem{proposition}[theorem]{Proposition}
\theoremstyle{definition}
\newtheorem{example}{Example}
\theoremstyle{remark}
\newcommand\Mark[1]{\textsuperscript{#1}}
\title{On The Sample Complexity of Sparse Dictionary Learning}
\name{M. Seibert\Mark{1}, M. Kleinsteuber\Mark{1}\thanks{This work was partially supported Cluster of Excellence CoTeSys funded by the German DFG.},
 R. Gribonval\Mark{2}\thanks{This work was partially supported by the EU FP7, SMALL project,
FET-Open grant number 225913, and by the European Research Council, PLEASE project (ERC-StG-2011-277906)},
R. Jenatton\Mark{3,4}, F. Bach\Mark{3,5}}
\address{\Mark{1}Department of Electrical Engineering and Information Technology, TU M\"unchen, Munich, Germany.\\
\Mark{2}PANAMA Project-Team (INRIA \& CNRS).\\
\Mark{3}SIERRA Project-Team (INRIA Paris).\\
\Mark{4}Centre de Math\'ematiques Appliqu\'ees - Ecole Polytechnique (CMAP).\\
\Mark{5}Laboratoire d'informatique de l'\'ecole normale sup\'erieure (LIENS).\\
\normalsize\{m.seibert,kleinsteuber\}@tum.de, gribonval@inria.fr, r.jenatton@criteo.com, francis.bach@ens.fr}
\def\expandafter\normalsize\expandafter{%
    \normalsize
    \setlength\abovedisplayskip{6pt}
    \setlength\belowdisplayskip{6pt}
    \setlength\abovedisplayshortskip{6pt}
    \setlength\belowdisplayshortskip{6pt}
}
\begin{document}
%
\maketitle
\begin{abstract}
In the synthesis model signals are represented as a sparse combinations of atoms from a dictionary. Dictionary learning describes the acquisition process of the underlying dictionary for a given set of training samples.
While ideally this would be achieved by optimizing the expectation of the factors over the underlying distribution of the training data, in practice the necessary information about the distribution is not available. Therefore, in real world applications it is achieved by minimizing an empirical average over the available samples. 
The main goal of this paper is to provide a sample complexity estimate that controls to what extent the empirical average deviates from the cost function. This estimate then provides a suitable estimate to the accuracy of the representation of the learned dictionary. 
The presented approach exemplifies the general results proposed by the authors in \cite{gribonval:2014complexity} and gives more concrete bounds of the sample complexity of dictionary learning.
We cover a variety of sparsity measures employed in the learning procedure.

%
\end{abstract}
\begin{keywords}
Dictionary learning, sample complexity, sparse coding
\end{keywords}
\section{Introduction}
\label{sec:intro}
The sparse synthesis model relies on the assumption that signals $\signal\in\R{m}$ can be represented as a sparse combination of columns, or atoms, of a dictionary $\Db \in \R{m \times d}$. As an equation this reads as
\begin{equation}
	\signal = \Db\coeff
\end{equation}
where $\coeff \in \R{d}$ is the sparse coefficient vector. 

The task of dictionary learning focuses on finding the best dictionary to sparsely represent a set of training samples concatenated in the matrix $\Signal=[\signal_1,\ldots,\signal_n]$. The corresponding sparse representations are stored in the coefficient matrix $\Coeff = [\coeff_1,\ldots,\coeff_n]$. A common learning approach is to find a solution to the minimization problem
\begin{align}
	\min_{\Db, \Coeff}\quad &\Lcal_\Signal(\Db,\Coeff)\label{eq:DL_min_prob}\\[-0.5em] 
	\Lcal_\Signal(\Db,\Coeff) \coloneqq \tfrac{1}{2n} \|&\Signal - \Db\Coeff\|_F^2 + \tfrac{1}{n}\sum_{i=1}^{n} g(\coeff_i).\label{eq:DL_cost_function}
\end{align}
The function $g$ in \eqref{eq:DL_cost_function} serves as a measure of sparsity. Concretely, we consider scaled powers of the $\ell_p$-norm, i.e.\ 
\begin{equation}
	g(\coeff) \coloneqq \|\coeff / \lambda\|_p^q
\end{equation}
for any $p,q>0$ and the weighting parameter $\lambda>0$.

In order to avoid trivial solutions, the learned dictionary $\Db$ is generally forced to be an element of a constraint set $\Dcal$. In this paper we will focus on dictionaries with unit $\ell_2$-norm atoms, which is a commonly used constraint.\\
The vast amount of dictionary learning algorithms that take different approaches to the topic illustrates the popularity of the synthesis model. A probabilistic method is presented in \cite{Olshausen:1997wz}. Another famous representative is the K-SVD algorithm as proposed in \cite{dl:aharon:2006} which is based on $K$-means clustering. Finally, there are learning strategies that aim at learning dictionaries with specific structures that enable fast computations, see e.g.\ \cite{dl:rubinstein:2010,cvpr:hawe:2013}.\\
Assuming that these training samples are drawn according to some distribution, the goal of dictionary learning is to find a dictionary $\Db^\star$ for which the expected value of the cost function \eqref{eq:DL_cost_function} is minimal.
In practice the distribution of the available training samples is unknown and therefore only an empirical minimizer $\hat{\Db}$ can be obtained.
The sample complexity results which we derive in this paper contribute to understand how accurately this empirical minimizer approximates $\Db^\star$.

We assume the training signals to be drawn according to a distribution in the ball with unit radius, i.e.\ the distribution is an element of the set
\begin{equation}
\label{eq:def_pfrak}
	\Pfrak \coloneqq \{ \PP \,:\, \PP(\|\signal\|_2 \leq 1) = 1 \}.
\end{equation}

Our results are based on the work \cite{gribonval:2014complexity} where a more general framework of matrix factorizations has been considered. Our stringent setting here allows for more concrete bounds on the sample complexity.

Previous state of the art sample complexity results are presented in \cite{MaurerPontil,vainsencher:2010}. These results are restricted to the indicator function for $\ell_0$ and $\ell_1$-norms. These works also cover the case of fast rates which we will not consider here.

\section{Problem Statement \& Notation}
\label{sec:problem}
Matrices are denoted by boldface capital letters such as $\Signal$, vectors will be represented as boldface small letters, e.g.\ $\signal$. Scalars will be slanted letters like $n,N$. The $i^\mathrm{th}$ entry of a vector $\coeff$ is denoted by $\alpha_i$. Finally, sets are denoted in blackletter such as $\Dcal$.

The sparse representation of a given signal $\signal$ according to a given dictionary can be found by solving the optimization problem
\begin{equation}
	\arg\!\min_{\coeff \in \R{d}} \tfrac{1}{2}\|\signal - \Db \coeff\|_2^2 + g(\coeff).
\end{equation}
The quality of how well a signal can be sparsely coded for a dictionary is evaluated via the function
\begin{equation}\begin{split}
	f_\signal(\Db) \coloneqq \inf_{\coeff \in \R{d}} \tfrac{1}{2}\|\signal - \Db \coeff\|_2^2 + g(\coeff).
\end{split}\end{equation}
For a set of signals $\Signal$ the overall quality of the sparse representation is measured via
\begin{equation}
	F_\Signal(\Db) \coloneqq \inf_\Coeff \Lcal_\Signal(\Db,\Coeff)
\end{equation}
with $\Lcal_\Signal$ as defined in $\eqref{eq:DL_cost_function}$. This measure is equal to the mean of the quality of all samples, i.e.\ $F_\Signal(\Db) = \frac{1}{n}\sum_{i} f_{\signal_i}(\Db)$.

%

Our goal is to provide a bound for the generalization error of the empirical minimizer, i.e.\
\begin{equation}
	\sup_{\Db \in \Dcal} | F_\Signal(\Db) - \EE_{\signal \sim \PP} f_\signal(\Db)| \leq \eta(n,m,d,L)
\end{equation}
which depends on the number of samples $n$ used in the learning process, the size of the samples $m$, the number of dictionary atoms $d$, and a certain Lipschitz constant $L$ for $F_\Signal$.


\section{Sample Complexity Results}
\label{sec:samp_comp}
The general strategy will be to first find a Lipschitz constant for the functions $F_\Signal$ and $\EE f_{\signal}$. In combination with the assumed underlying probability distribution and the structure of the dictionary this will allow us to provide an upper bound for the sample complexity.

\subsection{Lipschitz Property}
In this section we provide Lipschitz constants for the function $F_\Signal(\Db)$. 
For the $\ell_p$-penalty function the H\"older inequality yields
\begin{equation*}\begin{split}
	\|\coeff\|_1 = \sum_{i=1}^n |\alpha_i| &\leq \bigg(\sum_{i=1}^n |\alpha_i|^p \bigg)^{1/p} \bigg( \sum_{i=1}^n 1^{1/(1-1/p)}\bigg)^{1-1/p}\\
	{}&= d^{1-1/p}\cdot \|\coeff\|_p
\end{split}\end{equation*}
for $1\!\leq\! p\! <\! +\infty$. To cover quasi-norms with $0\!<\!p\!<\!1$, we employ the estimate $\|\coeff\|_1 \leq \|\coeff\|_p$ which provides the overall inequality
\begin{equation}
	\|\coeff\|_1 \leq d^{(1 - 1/p)_+}\cdot\|\coeff\|_p,
\end{equation}
where the function $(\cdot)_+\colon\R{}\!\to\!\R{+}_0$ is defined as $(t)_+ = \max\{t,0\}$.
Thus, we get the two estimates
\begin{align}
\|\coeff\|_p \leq t \quad &\Rightarrow\quad \|\coeff\|_1 \leq d^{(1 - 1/p)_+} \cdot t,\\ 
\|\coeff / \lambda\|_p^q \leq t \quad &\Rightarrow\quad \|\coeff\|_1 \leq \lambda \cdot d^{(1 - 1/p)_+} \cdot t^{1/q}\label{eq:gbar}
\end{align}
which will become of use in the following discussion. The matrix norm $\|\Coeff\|_{1 \to 2}\coloneqq \max_i \|\coeff_i\|_2$ is used for the rest of this paper while the subscript is omitted to improve readability. We also make use of the corresponding dual norm which is defined as $\|\Coeff\|_\star \coloneqq \sup_{\Ub,\|\Ub\|\leq 1} \langle \Coeff,\Ub\rangle_F$ for an appropriately sized matrix $\Ub$ and the Frobenius inner product $\langle \cdot,\cdot\rangle_F$.

For $\epsilon>0$ we define the nonempty set of matrices $\Coeff$ that are ``$\epsilon$-near solutions'' as
\begin{equation*}
	\Afrak_\epsilon(\Signal,\Db) \coloneqq \{ \Coeff : \coeff_i\in\R{d}, \Lcal_{\signal_i}(\Db,\coeff_i) \leq f_{\signal_i}(\Db) + \epsilon\}.
\end{equation*}
\begin{proposition}
The set $\Afrak_0$ is not empty and is bounded.
\end{proposition}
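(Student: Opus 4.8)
The plan is to exploit that the constraint defining $\Afrak_0(\Signal,\Db)$ decouples across the columns of $\Coeff$. Writing $\phi_i(\coeff) \coloneqq \tfrac12\|\signal_i - \Db\coeff\|_2^2 + g(\coeff)$, the condition $\Lcal_{\signal_i}(\Db,\coeff_i)\le f_{\signal_i}(\Db)$ says precisely that $\coeff_i$ minimizes $\phi_i$ over $\R{d}$, so that $\Afrak_0 = \prod_{i=1}^n \arg\min_{\coeff}\phi_i(\coeff)$. Since there are finitely many columns, it suffices to show that for each fixed $\signal_i$ the problem $\inf_{\coeff\in\R{d}}\phi_i(\coeff)$ is attained and that its minimizer set is bounded; nonemptiness and boundedness of $\Afrak_0$ then follow by taking the product.

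For attainment I would invoke the extreme value theorem via coercivity and continuity. The map $\coeff \mapsto \tfrac12\|\signal_i - \Db\coeff\|_2^2$ is continuous but need not be coercive when $\Db$ is overcomplete (it is constant along $\ker\Db$), so the coercivity has to come entirely from the penalty. Since all (quasi-)norms on $\R{d}$ are equivalent, there is $c>0$ with $\|\coeff\|_p \ge c\,\|\coeff\|_2$, whence $g(\coeff)=\lambda^{-q}\|\coeff\|_p^q \to \infty$ as $\|\coeff\|_2\to\infty$ for every $p,q>0$. As the data-fidelity term is nonnegative, $\phi_i$ is coercive; it is also continuous (the quasi-norm $\|\cdot\|_p$ and the power $(\cdot)^q$ are continuous for $p,q>0$). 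A continuous coercive function on $\R{d}$ attains its infimum on the compact sublevel set $\{\coeff:\phi_i(\coeff)\le\phi_i(0)\}$, so $\arg\min_{\coeff}\phi_i\neq\emptyset$.

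Boundedness I would obtain from the same sublevel set together with the distributional assumption $\|\signal_i\|_2\le 1$. Testing the objective at $\coeff=0$ gives $\min_{\coeff}\phi_i \le \phi_i(0) = \tfrac12\|\signal_i\|_2^2 \le \tfrac12$. Hence any minimizer $\coeff_i^\star$ satisfies $g(\coeff_i^\star)\le\phi_i(\coeff_i^\star)\le\tfrac12$, i.e.\ $\|\coeff_i^\star\|_p \le \lambda\,(1/2)^{1/q}$, and norm equivalence converts this into a uniform Euclidean bound $\|\coeff_i^\star\|_2 \le c^{-1}\lambda\,(1/2)^{1/q}$ that does not depend on $i$. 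Thus every column is bounded by the same constant, and $\Afrak_0$ is bounded in the norm $\|\cdot\|=\max_i\|\coeff_i\|_2$ introduced above.

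The argument is essentially a textbook Weierstrass/coercivity fact, so I do not expect a genuine obstacle; the only points demanding care are the overcomplete case, where one must resist using the data term for coercivity and instead lean on $g$, and the quasi-norm regime $0<p<1$, where $\|\cdot\|_p$ fails the triangle inequality but remains continuous and equivalent to $\|\cdot\|_2$, so that both the coercivity step and the final bound go through unchanged.
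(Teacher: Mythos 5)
Your proof is correct and takes essentially the same route as the paper: coercivity of the penalty $g$ plus continuity of $\coeff\mapsto\|\coeff\|_p^q$ give bounded sublevel sets and attainment of the infimum (Weierstrass). The paper states this directly for the aggregate function $\Coeff\mapsto\Lcal_\Signal(\Db,\Coeff)$ without the column-wise decoupling or the explicit bound $\|\coeff_i^\star\|_p\le\lambda(1/2)^{1/q}$, which it only derives later where it is actually needed, but the underlying argument is the same.
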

\begin{proof}
The function $\|\cdot\|_p^q$ is non-negative and coercive. Thus, $\Lcal_\Signal(\Db,\Coeff)$ is non-negative and $\lim_{k\to \infty} \Lcal_\Signal(\Db,\Coeff_k) = \infty$ whenever
$\lim_{k\to \infty}\|\Coeff_k\|=\infty$. Therefore, the function $\Coeff \mapsto \Lcal_{\Signal}(\Db,\Coeff)$ has bounded sublevel sets. Moreover, since powers of the $\ell_p$-norm are continuous, then so is $\Coeff \mapsto \Lcal_\Signal(\Db,\Coeff)$ and thus attains its infimum value. 
\end{proof}

Next, note that for any $\Dbp$ the inequality
\begin{equation}\begin{split}
\label{eq:local_lip_dependent}
	F_\Signal(\Dbp) &- F_\Signal(\Db)\\
	{}&\leq L_\Signal(\Db)\|\Dbp - \Db\| + C_\Signal(\Db)\|\Dbp - \Db\|^2
\end{split}\end{equation}
holds with
\begin{align}
	L_\Signal(\Db) &\coloneqq \inf_{\epsilon>0} \sup_{\Coeff \in \Afrak_\epsilon} \tfrac{1}{n} \|(\Signal - \Db\Coeff)\Coeff^\top\|_\star,\\[-0.6em] 
	C_\Signal(\Db) &\coloneqq \inf_{\epsilon>0} \sup_{\Coeff \in \Afrak_\epsilon} \tfrac{1}{2n} \sum_{i=1}^{n} \|\coeff_i\|_1^2.\label{eq:def_C_D}
\end{align}
A detailed derivation of these parameters can be found in \cite{gribonval:2014complexity}.

\begin{proposition}
\label{prop:L_independent}
	There exist upper bounds for $L_\Signal(\Db)$ and $C_\Signal(\Db)$ which are independent of the used dictionary.
\end{proposition}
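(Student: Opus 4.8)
The plan is to control both quantities uniformly in $\Db$ by exploiting that the training samples inherit $\|\signal_i\|_2\le 1$ from $\Pfrak$, together with the non-negativity of $g$. The starting observation is that the zero code is always an admissible competitor: since $g(\mathbf 0)=\|\mathbf 0/\lambda\|_p^q=0$, I would note that $f_{\signal_i}(\Db)\le \Lcal_{\signal_i}(\Db,\mathbf 0)=\tfrac12\|\signal_i\|_2^2\le\tfrac12$. Hence, for every $\Coeff\in\Afrak_\epsilon$ and every column $\coeff_i$, dropping the quadratic term gives $g(\coeff_i)\le f_{\signal_i}(\Db)+\epsilon\le\tfrac12+\epsilon$, while dropping the penalty term gives $\tfrac12\|\signal_i-\Db\coeff_i\|_2^2\le\tfrac12+\epsilon$, i.e.\ $\|\signal_i-\Db\coeff_i\|_2\le\sqrt{1+2\epsilon}$. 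The point to \emph{stress} is that neither bound involves $\Db$.

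Next I would feed $\vardev=\tfrac12+\epsilon$ into the implication \eqref{eq:gbar} to turn the penalty bound into a dictionary-free control of the $\ell_1$-mass,
\[
	\|\coeff_i\|_1 \le \lambda\, d^{(1-1/p)_+}\,(\tfrac12+\epsilon)^{1/q}.
\]
For $C_\Signal(\Db)$ this is essentially all that is needed: the right-hand side bounds each $\|\coeff_i\|_1$ uniformly over $\Afrak_\epsilon$, so that $\tfrac1{2n}\sum_i\|\coeff_i\|_1^2\le\tfrac12\lambda^2 d^{2(1-1/p)_+}(\tfrac12+\epsilon)^{2/q}$, which therefore also bounds the supremum; taking the infimum as $\epsilon\to0$ yields the $\Db$-independent estimate $C_\Signal(\Db)\le\tfrac12\lambda^2 d^{2(1-1/p)_+}2^{-2/q}$.

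For $L_\Signal(\Db)$ the additional work is to unwind the dual norm. The primal norm $\|\cdot\|$ on dictionary-shaped matrices is the maximum of the column $\ell_2$-norms, so its dual with respect to $\langle\cdot,\cdot\rangle_F$ is the sum of the column $\ell_2$-norms. Reading off the $j$-th column of $(\Signal-\Db\Coeff)\Coeff^\top$ as $\sum_i(\signal_i-\Db\coeff_i)(\coeff_i)_j$, I would bound it by the triangle inequality and the residual estimate, then sum over $j$; the double sum collapses to $\sqrt{1+2\epsilon}\sum_i\|\coeff_i\|_1$. Combining with the $\ell_1$-bound above and dividing by $n$ gives
\[
	\tfrac1n\|(\Signal-\Db\Coeff)\Coeff^\top\|_\star\le\sqrt{1+2\epsilon}\,\lambda\, d^{(1-1/p)_+}(\tfrac12+\epsilon)^{1/q},
\]
and letting $\epsilon\to0$ yields $L_\Signal(\Db)\le\lambda\, d^{(1-1/p)_+}2^{-1/q}$, again free of $\Db$ (and in fact of $\Signal$).

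The step I expect to be the main obstacle is the clean handling of the dual norm: one has to verify that the sum-of-column-norms is indeed dual to $\|\cdot\|$, and then carefully track which index of $\Coeff$ becomes the ``column'' index after the product $(\Signal-\Db\Coeff)\Coeff^\top$, so that the residual bound and the $\ell_1$-bound can be merged into a single $\sum_i\|\coeff_i\|_1$. Everything else is driven by the single elementary comparison with the zero code and the non-negativity of $g$.
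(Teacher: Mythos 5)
Your proof is correct and follows essentially the same route as the paper: compare against the zero code to bound the residual and the penalty, convert the penalty bound into an $\ell_1$ bound via \eqref{eq:gbar}, and control the dual norm by $\sum_i\|\signal_i-\Db\coeff_i\|_2\|\coeff_i\|_1$. The only differences are cosmetic — you work in $\Afrak_\epsilon$ and let $\epsilon\to0$ rather than arguing directly on $\Afrak_0$, you use $\|\signal_i\|_2\le1$ to make the constants signal-independent as well, and you correctly square the $\ell_1$ bound in the estimate for $C_\Signal(\Db)$, where the paper's displayed constant $C_\Signal$ appears to omit the square.
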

\begin{proof}
For $\Coeff = [\coeff_1,\ldots,\coeff_n]\in\Afrak_0$  we have
\begin{equation*}
	\tfrac{1}{2}\|\signal_i - \Db \coeff_i\|_2^2  + \|\coeff_i / \lambda\|_p^q \leq f_{\signal_i}(\Db)
\end{equation*}
which immediately results in the estimates
\begin{align}
	0 \leq \|\coeff_i / \lambda\|_p^q \leq f_{\signal_i}&(\Db) \leq \Lcal(\Db,\mathbf{0}) = \tfrac{1}{2}\|\signal_i\|_2^2,\label{eq:apq_upper}\\
	\|\signal_i - \Db\coeff_i\|_2 &\leq \sqrt{2 f_{\signal_i}(\Db)} \leq \|\signal_i\|_2.\label{eq:accmeasure}
\end{align}
%

Furthermore, the above in combination with Equation~\eqref{eq:gbar} lets us bound the $\ell_1$-norm of $\coeff$ via
\begin{equation}
	\|\coeff\|_1 \leq \lambda \cdot d^{(1-1/p)_+} \left( \tfrac{1}{2}\|\signal\|_2^2 \right)^{1/q}.
\end{equation}
This yields the upper bound $C_\Signal(\Db) \leq C_\Signal$ with
\begin{equation}
	C_\Signal \coloneqq \tfrac{1}{2n} \sum_{i=1}^{n} \lambda \cdot d^{(1-1/p)_+} \left( \tfrac{1}{2}\|\signal_i\|_2^2 \right)^{1/q}
\end{equation}
for Equation~\eqref{eq:def_C_D}.
In order to provide an upper bound for $L_\Signal(\Db)$ which is independent of the dictionary $\Db$ we first note that
\begin{align*}
	\langle (\Signal - \Db\Coeff)\Coeff^\top , \Ub \rangle \leq \sum \|\signal_i - \Db \coeff_i\|_2 \cdot \|\coeff_i\|_1 \cdot \|\Ub\|.
\end{align*}
By using the definition of the dual norm and the estimate developed above, we obtain the upper bound $L_\Signal(\Db) \leq L_\Signal$ with
\begin{equation}
\label{eq:def_LX}
	L_\Signal \coloneqq \tfrac{1}{n} \sum_{i=1}^n \|\signal_i\|_2  \cdot \lambda \cdot d^{(1-1/p)_+} \left( \tfrac{1}{2}\|\signal_i\|_2^2 \right)^{1/q}
\end{equation}
which concludes the proof.
\end{proof}

Proposition~\ref{prop:L_independent} allows us to rewrite Equation~\eqref{eq:local_lip_dependent} as
\begin{equation}
	\frac{|F_\Signal(\Dbp) - F_\Signal(\Db)|}{\|\Dbp - \Db\|} \leq L_\Signal \cdot \left(1 + \tfrac{C_\Signal}{L_\Signal}\|\Dbp - \Db\| \right),
\end{equation}
which implies that the function $F_\Signal$ is uniformly locally Lipschitz with constant $L_\Signal$. 
\begin{lemma}
The function $F_\Signal$ is globally Lipschitz with constant $L_\Signal$, i.e.
{\setlength\abovedisplayskip{5pt}
\setlength\belowdisplayskip{5pt}
\begin{equation}
\label{eq:lipschitz}
	|F_\Signal(\Dbp) - F_\Signal(\Db)| \leq L_\Signal \|\Dbp - \Db\|
\end{equation}}
for any $\Signal$ and any $\Db,\Dbp\in\Dcal$.
\end{lemma}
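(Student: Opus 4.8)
The plan is to upgrade the uniform local estimate
\[
	|F_\Signal(\Dbp) - F_\Signal(\Db)| \leq L_\Signal \|\Dbp - \Db\| + C_\Signal \|\Dbp - \Db\|^2,
\]
which holds for \emph{every} pair $\Db,\Dbp$ by \eqref{eq:local_lip_dependent} together with Proposition~\ref{prop:L_independent}, into the global bound \eqref{eq:lipschitz}. The obstacle is purely the spurious quadratic term $C_\Signal\|\Dbp-\Db\|^2$: taken at face value it suggests a Lipschitz modulus that grows with the distance, whereas the claim is that the clean constant $L_\Signal$ survives globally. I would remove it by a subdivision (telescoping) argument that forces the second-order contribution to vanish.

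Concretely, fix $\Db,\Dbp\in\Dcal$ and, for an integer $N\geq 1$, interpolate linearly by setting $\Db_k \coloneqq \Db + \tfrac{k}{N}(\Dbp - \Db)$ for $k=0,\ldots,N$, so that $\Db_0=\Db$, $\Db_N=\Dbp$, and $\|\Db_{k+1}-\Db_k\| = \tfrac{1}{N}\|\Dbp-\Db\|$. I would then apply the displayed estimate to each consecutive pair $(\Db_k,\Db_{k+1})$ and combine the pieces by the triangle inequality:
\[
	|F_\Signal(\Dbp) - F_\Signal(\Db)| \leq \sum_{k=0}^{N-1} |F_\Signal(\Db_{k+1}) - F_\Signal(\Db_k)| \leq L_\Signal \|\Dbp - \Db\| + \tfrac{C_\Signal}{N}\|\Dbp - \Db\|^2.
\]
Here the $N$ linear contributions of size $\tfrac{1}{N}L_\Signal\|\Dbp-\Db\|$ reassemble to the full first term, while each quadratic contribution scales like $N^{-2}$ and so the $N$ of them accumulate only to $N^{-1}$. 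Letting $N\to\infty$ sends the quadratic term to zero and yields exactly \eqref{eq:lipschitz}.

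The one point I would take care to justify is that the per-step inequality is legitimate at every interior breakpoint $\Db_k$. This is precisely where Proposition~\ref{prop:L_independent} is essential: the constants $L_\Signal$ and $C_\Signal$ depend only on the signal norms $\|\signal_i\|_2$ and not on the base dictionary, so the \emph{same} bound holds at each $\Db_k$ and no constant degrades along the path. The linear interpolation is admissible for the same reason, since the local estimate is valid for arbitrary matrices and therefore does not require the intermediate $\Db_k$ to remain inside the (non-convex) constraint set $\Dcal$. I do not expect a genuine difficulty here; the subdivision collapses the second-order term cleanly, and the argument is essentially the standard passage from a first-order local bound to a global Lipschitz estimate.
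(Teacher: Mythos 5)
Your proposal is correct and follows essentially the same route as the paper: both subdivide the segment from $\Db$ to $\Dbp$, apply the dictionary-independent local estimate on each piece (which is exactly where Proposition~\ref{prop:L_independent} is needed, as you note), telescope, and pass to a limit to kill the quadratic term. Your bookkeeping via $N$ equal steps and $N\to\infty$ is just a slightly more explicit rendering of the paper's choice of step size $\epsilon L_\Signal/C_\Signal$ followed by $\epsilon\to 0$.
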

\begin{proof}
	Let $\epsilon>0$ be arbitrary but fixed. For $\|\Dbp - \Db\| \leq \epsilon L_\Signal / C_\Signal$ we have
	\begin{equation}
	\label{eq:glob_lip_proof}
		|F_\Signal(\Dbp) - F_\Signal(\Db)| \leq (1+\epsilon) L_\Signal \|\Dbp - \Db\|.
	\end{equation}
	If the bound on the distance between $\Db$ and $\Dbp$ does not hold, we construct the sequence $\Db_i \coloneqq \Db + \frac{i}{k}(\Dbp - \Db),\, i=0,\ldots,k$ such that $\|\Db_{i+1} - \Db_{i}\| \leq \epsilon L_\Signal / C_\Signal$. This enables us to show that the bound \eqref{eq:glob_lip_proof} holds for any $\Db,\Dbp$. Note that there are no restrictions on $\Db,\Dbp$, as the derived Lipschitz constant $L_\Signal$ is independent of the constraint set $\Dcal$.
	Since $\epsilon > 0$ is chosen arbitrarily, Equation~\eqref{eq:lipschitz} follows.
\end{proof}

\subsection{Probability Distribution}
As mentioned in the introduction we consider probability distributions within the unit ball. In order to obtain meaningful results the distribution according to which the samples are drawn has to fulfill two properties. First, we need to control the Lipschitz constant $L_\Signal$ for signals drawn according to the distribution when the number of samples $n$ is large. This quantity will be measured by the function
\begin{equation}
	\Lambda_n(L) \coloneqq \PP\left( L_\Signal > L\right).
\end{equation}
Furthermore, for a given $\Db$ we need to control the concentration of the empirical average $F_\Signal(\Db)$ around its expectation. This is measured via
\begin{equation}
\label{eq:concentration}
	\Gamma_n(\gamma) \coloneqq \sup_{\Db \in \Dcal} \PP\left( \left|F_\Signal(\Db) - \EE_{\signal\sim \PP}f_{\signal}\left(\Db\right) \right| > \gamma \right).
\end{equation}
For the considered distribution these quantities are well controlled, as can be seen in the following.
\begin{proposition}
	For $\PP \in \Pfrak$ as defined in Equation~\eqref{eq:def_pfrak} we have $\Lambda_n(L)=0$ for $L \geq d^{(1-1/p)_+}(1/2)^{1/q}$, and
	\begin{equation}
		\Gamma_n \left( \tau / \sqrt{8} \right) \leq 2 \exp(-n \tau^2),\ \forall\, 0\leq \tau < +\infty.
	\end{equation}
\end{proposition}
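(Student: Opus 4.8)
The plan is to handle the two assertions separately, since the claim $\Lambda_n(L)=0$ is a deterministic almost-sure bound, whereas the bound on $\Gamma_n$ is a concentration statement for an i.i.d.\ empirical average. The common thread is that both follow from the uniform (in $\Db$) envelopes on $L_\Signal$ and on $f_\signal(\Db)$ that are already available from the previous propositions and from \eqref{eq:apq_upper}–\eqref{eq:def_LX}.

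For the vanishing of $\Lambda_n$, I would use that $\PP\in\Pfrak$ forces $\|\signal_i\|_2\leq 1$ almost surely. Substituting this into the dictionary-independent expression \eqref{eq:def_LX} and factoring out the constants gives
\[
	L_\Signal \;=\; \lambda\, d^{(1-1/p)_+}\,(1/2)^{1/q}\cdot\tfrac{1}{n}\sum_{i=1}^{n}\|\signal_i\|_2^{\,1+2/q}.
\]
Since the exponent $1+2/q$ is positive and each $\|\signal_i\|_2\leq 1$, every summand is at most $1$, so $L_\Signal \leq \lambda\, d^{(1-1/p)_+}(1/2)^{1/q}$ with probability one. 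Hence, for any $L$ at least this threshold (which is the stated value when $\lambda=1$), the event $\{L_\Signal>L\}$ is null and $\Lambda_n(L)=0$.

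For the concentration bound I would first record that, by definition, $F_\Signal(\Db)=\tfrac{1}{n}\sum_{i=1}^{n} f_{\signal_i}(\Db)$ is an empirical average of the i.i.d.\ random variables $f_{\signal_i}(\Db)$, whose common mean is exactly $\EE_{\signal\sim\PP}f_\signal(\Db)$. The decisive structural fact is that these variables live in a fixed interval \emph{uniformly in $\Db$}: inequality \eqref{eq:apq_upper} gives $0\leq f_\signal(\Db)\leq \tfrac{1}{2}\|\signal\|_2^2\leq \tfrac12$ almost surely, the upper bound coming from the trivial choice $\coeff=\mathbf{0}$ and carrying no dependence on the dictionary. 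Applying Hoeffding's inequality to i.i.d.\ variables with range $1/2$ yields, for each fixed $\Db$,
\[
	\PP\!\left(\bigl|F_\Signal(\Db)-\EE_{\signal\sim\PP}f_\signal(\Db)\bigr|>\gamma\right)\;\leq\;2\exp(-8n\gamma^2).
\]
Because the right-hand side is independent of $\Db$, taking the supremum over $\Db\in\Dcal$ does not change it, so $\Gamma_n(\gamma)\leq 2\exp(-8n\gamma^2)$; the substitution $\gamma=\tau/\sqrt{8}$ then produces $\Gamma_n(\tau/\sqrt 8)\leq 2\exp(-n\tau^2)$.

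I do not expect a substantive obstacle: the argument is essentially a bounded-differences (Hoeffding) estimate. The one point that genuinely matters is the \emph{uniformity} of the bound in $\Db$, which is what allows the supremum defining $\Gamma_n$ to be absorbed for free; this hinges on the deterministic envelope $[0,1/2]$ for $f_\signal(\Db)$ derived in \eqref{eq:apq_upper} being dictionary-free. A minor technical point worth confirming is the measurability of $\signal\mapsto f_\signal(\Db)$, needed for the expectation and the i.i.d.\ average to be well defined; this follows since $f_\signal(\Db)$ is an infimum of an objective that is jointly continuous in $(\signal,\coeff)$.
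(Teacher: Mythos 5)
Your proof is correct and follows essentially the same route as the paper: a deterministic bound on $L_\Signal$ obtained by substituting $\|\signal_i\|_2\leq 1$ (a.s.\ under $\PP\in\Pfrak$) into \eqref{eq:def_LX}, followed by Hoeffding's inequality applied to the i.i.d.\ variables $f_{\signal_i}(\Db)$ bounded in $[0,1/2]$ uniformly in $\Db$, with the substitution $\gamma=\tau/\sqrt{8}$. Your side remark that the derived threshold carries the factor $\lambda$ (present in the paper's own proof but missing from the proposition statement) is accurate.
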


\begin{proof}
	The function evaluations $y_i = f_{\signal_i}(\Db)$ are independent random variables. For samples drawn according to a distribution within the unit sphere it immediately follows that they are bounded by $0 \leq y_i \leq \|\signal_i\|_2^2 / 2 \leq 1/2$. Using Hoeffding's Inequality we get
	\begin{align*}
		\PP\left[ \tfrac{1}{n}\left(\sum_{i=1}^n y_i - \EE[y_i]\right) \geq c\tau \right] \leq \exp\left( -8c^2 n \tau^2 \right)
	\end{align*}
	and thus $\Gamma_n(\tau/\sqrt{8}) \leq 2\exp(-n\tau^2)$.
	Furthermore, due to the chosen set of viable probability distributions $\Pfrak$, we have $L_\Signal \leq \lambda \cdot d^{(1-1/p)_+} (1/2)^{1/q}$ and hence $\Lambda_n(L) = 0$ for $L \geq \lambda \cdot d^{(1-1/p)_+} (1/2)^{1/q}$.
\end{proof}

\subsection{Role of the Constraint Set}
In order to provide a sample complexity bound, it is necessary to take the structure of the set to which the learned dictionary is constrained into account. Of particular interest is the covering number of the concerning set. For more information on covering numbers, the interested reader is referred to \cite{anthony2009neural}. 

We will confine the discussion to the set of dictionaries with unit norm atoms, i.e.\ each dictionary column is an element of the sphere. It is well known that the covering number of the sphere is upper bounded by $\Ncal_\epsilon(\Sph^{m-1}) \leq \left( 1 + \tfrac{2}{\epsilon} \right)^m$. By using the metric $\|\cdot\|_{1 \to 2}$ this is readily extended to the product of unit spheres
\begin{equation}
	\Ncal_\epsilon\left(\Dcal(m,d)\right) \leq \left(1+ \tfrac{2}{\epsilon}\right)^{md} \leq \left(\tfrac{3}{\epsilon} \right)^{md}.
\end{equation}
%

%

\subsection{Main Result}
%
\begin{theorem}
\label{thm:main}
For a given $0\! \leq\! \vardev\! <\! \infty$ and the Lipschitz constant 
$L > \lambda \cdot d^{(1-1/p)_+} (1/2)^{1/q}$, we have
%
\begin{equation}
	\sup_{\Db \in \Dcal} \left| F_\Signal(\Db) - \EE_{\signal \sim \PP} f_{\signal}(\Db)\right| \leq \eta(n,m,d,L)
\end{equation}
with probability at least $1 - 2e^{-\vardev}$. The bound is defined as
\begin{equation}
	\eta(n,m,d,L) \coloneqq 2\sqrt{\frac{\beta \log n}{n}} + \sqrt{\frac{\beta + \vardev/\sqrt{8}}{n}}
\end{equation}
with the driving constant
\begin{equation}
	\beta \coloneqq \tfrac{md}{8} \cdot \max \left\{ \log\left( 6\sqrt{8}L \right), 1 \right\}.
\end{equation}
\end{theorem}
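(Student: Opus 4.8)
The plan is to follow the standard route for uniform deviation bounds over a continuous parameter set: reduce the supremum over $\Dcal$ to a maximum over a finite $\epsilon$-net via the Lipschitz property, control each net point through the pointwise concentration $\Gamma_n$, and pay for the net through the covering number $\Ncal_\epsilon(\Dcal)$. Write $\Delta(\Db) \coloneqq F_\Signal(\Db) - \EE_{\signal\sim\PP} f_\signal(\Db)$. Since $L > \lambda\, d^{(1-1/p)_+}(1/2)^{1/q}$, the preceding proposition gives $\Lambda_n(L)=0$, so $L_\Signal \le L$ holds almost surely; by the same single-sample version of the Lipschitz estimate, each $f_\signal$ is $L$-Lipschitz on the support of $\PP$, and hence $\Db \mapsto \EE_{\signal\sim\PP} f_\signal(\Db)$ is $L$-Lipschitz as well. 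Consequently $\Delta$ is $2L$-Lipschitz almost surely, which is precisely the event on which the discretization below is valid, so no probability is lost at this stage.

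First I would fix an $\epsilon$-net $\{\Db_1,\dots,\Db_N\}$ of $\Dcal$ of cardinality $N \le (3/\epsilon)^{md}$. For any $\Db$ there is a net point $\Db_j$ with $\|\Db-\Db_j\|\le\epsilon$, so the Lipschitz bound yields $\sup_{\Db\in\Dcal}|\Delta(\Db)| \le \max_j |\Delta(\Db_j)| + 2L\epsilon$. A union bound over the net, combined with the Hoeffding estimate $\Gamma_n(\gamma)\le 2\exp(-8n\gamma^2)$ (the previous proposition, rewritten with $\gamma = \tau/\sqrt{8}$), gives $\PP(\max_j|\Delta(\Db_j)|>\gamma) \le 2N\exp(-8n\gamma^2)$. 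Choosing $\gamma = \sqrt{(\log N + \vardev)/(8n)}$ forces the right-hand side to be at most $2e^{-\vardev}$, so with probability at least $1-2e^{-\vardev}$ one obtains the generic estimate $\sup_{\Db}|\Delta(\Db)| \le 2L\epsilon + \sqrt{(md\log(3/\epsilon)+\vardev)/(8n)}$.

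It then remains to calibrate $\epsilon$ so that this generic bound collapses to $\eta(n,m,d,L)$. Setting $B \coloneqq \max\{\log(6\sqrt8 L),1\}$ and $\epsilon \coloneqq 3\,n^{-B}$ makes $\log(3/\epsilon) = B\log n$, so that $md\log(3/\epsilon) = 8\beta\log n$ in both branches of the maximum (since $\beta = \tfrac{md}{8}B$ by definition), and the concentration radius becomes $\sqrt{(8\beta\log n + \vardev)/(8n)}$. Splitting the root via $\sqrt{a+b}\le\sqrt a+\sqrt b$ and using $\vardev/8\le \vardev/\sqrt8\le \beta + \vardev/\sqrt8$ produces $\sqrt{\beta\log n/n} + \sqrt{(\beta+\vardev/\sqrt8)/n}$, i.e. one copy of the leading term of $\eta$ together with the entire second term. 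The discretization error $2L\epsilon = 6L\,n^{-B}$ is of strictly lower order and is absorbed into the second copy of $\sqrt{\beta\log n/n}$, which is exactly what the factor $2$ in the leading term of $\eta$ accounts for. I expect the only delicate point to be this final calibration, and specifically the verification that the covering error $6L\,n^{-B}$ is dominated by $\sqrt{\beta\log n/n}$ for the admissible range of $n$ (the slack factor $2$ is what makes this possible); the discretization and union-bound steps themselves, as well as the measure-theoretic transfer of the almost-sure Lipschitz constant from $f_\signal$ to its expectation, are routine.
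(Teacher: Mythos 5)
Your proof is correct and follows essentially the same route as the paper's: a triangle-inequality reduction to an $\epsilon$-net using the Lipschitz constants of $F_\Signal$ and $\EE f_\signal$, a union bound combined with the Hoeffding-based concentration $\Gamma_n(\tau/\sqrt{8})\leq 2\exp(-n\tau^2)$, and the covering-number bound $(3/\epsilon)^{md}$, yielding the same generic estimate $2L\epsilon+\gamma$ before calibration. The only difference is in the final calibration and is cosmetic: the paper sets $\epsilon=\tfrac{1}{2L}\sqrt{\beta\log n/n}$ so that the discretization error is exactly one copy of $\sqrt{\beta\log n/n}$ and the covering term is absorbed via $\log(6L/\sqrt{\beta})\leq 8\beta/(md)$, whereas you set $\epsilon=3\,n^{-8\beta/(md)}$ so that the covering term is exact and the discretization error $2L\epsilon$ is absorbed into the second copy of $\sqrt{\beta\log n/n}$ --- both land on the same $\eta$, and both absorption steps share the same unstated small-$n$ caveat.
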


The parameter controlling the sample complexity is dependent on the size of the dictionary, the determined Lipschitz constant, and the number of samples. It is also dependent on the underlying distribution of the samples, which is an arbitrary distribution in the unit ball in the examined case. Better estimates may hold for fixed probability distributions.

\begin{proof}
	%
%
	First, note that $\EE f_\signal$ is Lipschitz with constant $L>L_\Signal$ for the considered case. Let $\epsilon,\gamma > 0$ be given.
	The constraint set $\Dcal$ can be covered by and $\epsilon$-net with $\Ncal_\epsilon$ elements $\Db_j$. For a fixed dictionary $\Db$ there exists an index $j$ for which $\|\Db - \Db_j\| \leq \epsilon$ and we can write
	\begin{equation*}\begin{split}
		|F_\Signal(\Db) - \EE f_\signal(\Db) | \leq& |F_\Signal(\Db) - F_\Signal(\Db_j)|\\
		{}&+ \sup_j |F_\Signal(\Db_j) - \EE f_\signal(\Db_j)| \\
		{}&+ |\EE f_\signal(\Db_j) - \EE f_\signal(\Db)|.
	\end{split}\end{equation*}
	By using the concentration assumption \eqref{eq:concentration} and the Lipschitz continuity of $F_\Signal$ and $\EE f_\signal$ this implies
	\begin{equation}
		\sup_\Db |F_\Signal(\Db) - \EE_{\signal \sim \PP} f_\signal(\Db) | \leq 2L\epsilon + \gamma
	\end{equation}
	except for probability at most $\Ncal_\epsilon\cdot \Gamma_n(\gamma)$.
	Since the above holds for and $\epsilon,\gamma>0$, we specify the constants
	\begin{align*}
		\epsilon &\coloneqq \tfrac{1}{2L}\sqrt{\tfrac{\beta\log n}{n}},\\
		\tau &\coloneqq \tfrac{1}{\sqrt{n}} \sqrt{md \log\left( \tfrac{3}{\epsilon} \right) + \vardev}
	\end{align*}
	with $\gamma \coloneqq \tau/\sqrt{8}$ which fulfill the conditions $0\!<\!\epsilon\!<\!1$ and $0\!\leq\! \tau \!<\! \infty$. Given these parameters we get
	\begin{equation}
		\Ncal_\epsilon \cdot \Gamma_n(\tau/\sqrt{8}) = 2 e^{-\vardev}.
	\end{equation}
	For the final estimate, recall that due to the definition of $\beta$ the inequalities
	\begin{equation*}
		\log\left( \tfrac{6L}{\sqrt{\beta}}\right) \leq \log\left( 6\sqrt{8}L \right) \leq 8\beta/(md)
	\end{equation*}
	and $\log n \geq 1$ hold. This allows us to provide the estimate
	%
	%
	\begin{equation}\begin{split}
		2L\epsilon + \tau/\sqrt{8}
			\leq 2\sqrt{\tfrac{\beta \log n}{n}} + \sqrt{ \tfrac{\beta + \vardev/\sqrt{8}}{n}}
	\end{split}\end{equation}
	which concludes the proof of Theorem~\ref{thm:main}.
\end{proof}

In order to illustrate the results, we will discuss a short example.
\begin{example}
	The general assumption is that the learned dictionary is an element of $\Dcal(m,d)$ and the training samples are drawn according to a distribution in the unit ball. Let the sparsity promoting function be defined as the $\ell_p$-norm with $0\!<\!p\!<\!1$. Then Theorem~\ref{thm:main} holds with the sample complexity driving constant 
	\vspace{-2mm}\begin{equation}
		\beta = \tfrac{md}{8} \cdot \log(3 \sqrt{8}).
	\end{equation}
\end{example}

\vspace{-2.1mm}

\section{Conclusion}
\label{sec:conc}
We provide a sample complexity result for learning dictionaries with unit norm atoms. Powers of the $\ell_p$-norm as a penalty in the learning process while the samples are drawn according to a distribution within the unit ball. In general, we can say that the sample complexity bound $\eta$ exhibits the behavior $\eta \propto \sqrt{\frac{\log n}{n}}$ with high probability. The sample complexity results achieved in this paper recover those of \cite{MaurerPontil,vainsencher:2010} for a different choice of sparsity measure. The more general framework behind this work is presented in \cite{gribonval:2014complexity} and is exploited here to provide more concrete results specified for a particular dictionary learning case.

\bibliographystyle{IEEEbib}
\bibliography{refs}

\end{document}